\theoremstyle{plain}
\newtheorem{theorem}{Theorem}[section]
\newtheorem{lemma}[theorem]{Lemma}
\theoremstyle{definition}
\newtheorem{definition}[theorem]{Definition}
\newtheorem{assumption}[theorem]{Assumption}
\theoremstyle{remark}
\icmltitlerunning{Sorbet: A Neuromorphic Hardware-Compatible Transformer-Based Spiking Language Model}
\begin{document}

\twocolumn[
\icmltitle{Sorbet: A Neuromorphic Hardware-Compatible Transformer-Based Spiking Language Model}




\begin{icmlauthorlist}
\icmlauthor{Kaiwen Tang}{soc}
\icmlauthor{Zhanglu Yan}{soc}
\icmlauthor{Weng-Fai Wong}{soc}
\end{icmlauthorlist}

\icmlaffiliation{soc}{School of Computing, National University of Singapore, Singapore, Singapore}

\icmlcorrespondingauthor{Zhanglu Yan}{zlyan@nus.edu.sg}

\icmlkeywords{Machine Learning, ICML}

\vskip 0.3in
]



\printAffiliationsAndNotice{}  

\begin{abstract}
For reasons such as privacy, there are use cases for language models at the edge. This has given rise to small language models targeted for deployment in resource-constrained devices where energy efficiency is critical. Spiking neural networks (SNNs) offer a promising solution due to their energy efficiency, and there are already works on realizing transformer-based models on SNNs. However, key operations like softmax and layer normalization (LN) are difficult to implement on neuromorphic hardware, and many of these early works sidestepped them. To address these challenges, we introduce Sorbet, a transformer-based spiking language model that is more neuromorphic hardware-compatible. Sorbet incorporates a novel shifting-based softmax called PTsoftmax and a Bit Shifting PowerNorm (BSPN), both designed to replace the respective energy-intensive operations. By leveraging knowledge distillation and model quantization, Sorbet achieved a highly compressed binary weight model that maintains competitive performance while achieving $27.16\times$ energy savings compared to BERT. We validate Sorbet through extensive testing on the GLUE benchmark and a series of ablation studies, demonstrating its potential as an energy-efficient solution for language model inference. Our code is publicly available at \href{https://github.com/Kaiwen-Tang/Sorbet}{https://github.com/Kaiwen-Tang/Sorbet}

\end{abstract}

\section{Introduction}

The phenomenal success of large language models (LLMs)~\cite{devlin2019bert, brown2020language, raffel2020exploring, touvron2023llama, lewis2020bart, taylor2022galactica} has prompted research into distilling {\em small language models} (SLMs) ~\cite{sun2020mobilebert, eldan2023tinystories, zhang2024tinyllama} from LLMs that can run on the resource-constrained edge devices.
Local inference on the device is critical in situations where data privacy is essential or when connectivity to powerful remote computing resources is unfeasible~\cite{sun2020mobilebert, sanh2019distilbert}. The high demand for energy efficiency drives research to simplify SLM inference, making it better suited for devices with low energy consumption while maintaining adequate performance.

In parallel, spiking neural networks (SNNs) have gained significant attention due to their remarkable energy efficiency.
SNNs closely mimic biological neural networks and are multiplication-free, resulting in much lower energy consumption compared to artificial neural networks (ANNs).
Developing SNNs on neuromorphic hardware significantly improves energy efficiency while preserving impressive performance compared to advanced ANNs~\cite{guo2023direct, shi2024spikingresformer}.
For example, the SNN based on the ViT-base architecture~\cite{dosovitskiy2020image} achieves competitive accuracy levels (e.g., up to 81.10\% on ImageNet) to its ANN counterpart, but with only one-tenth of energy consumption~\cite{zhou2024spikformer}.

However, developing transformer~\cite{vaswani2017attention}-based spiking language models is challenging, particularly in encoding spikes and handling the incompatible operations of neuromorphic hardware.
Previous works have focused on replacing matrix multiplications with encoding methods, like SpikFormer~\cite{zhou2024spikformer}, SpikeBERT~\cite{lv2023spikebert}, SpikeLM~\cite{xingspikelm}, and SpikingBERT~\cite{bal2024spikingbert}.
However, the transformer architecture also includes {\em softmax} and {\em layer normalization} (LN) operations, which are energy-intensive and incompatible with neuromorphic hardware.
Models like SpikeLM and SpikingBERT, which target natural language processing (NLP) tasks, still rely on softmax and LN operations, limiting their compatibility with neuromorphic hardware.
SpikFormer addresses this issue by adopting features from convolutional networks and batch normalization, avoiding the use of these operations.  While effective for vision tasks, their applicability to language tasks, which rely more heavily on operations like LN, remains unproven. Thus, there is no transformer-based spiking language model that is both hardware-compatible and performs well to date.

\begin{table}[htb]
  \caption{Comparison with other models. The `+' in the NLP row indicates the capability to perform language tasks, while the `+' in the softmax row denotes the inclusion of softmax operations.}
  \label{comStr}
  \centering
  \begin{tabular}{c|cccccc}
    \toprule
             & BERT & Spikformer & SpikeLM & Sorbet\\
    \midrule
    NLP & +  & -          & +     & +    \\
    \midrule
    softmax  & +    & -        & +       & -    \\
    Norm     & LN    & BN         &  LN       & BSPN    \\
    Weight & 32-bits   & 32-bits       & 1-bit & 1-bit    \\
    \bottomrule
  \end{tabular}
\end{table}
\vspace{-3pt}
In this work, we introduce an SNN-compatible normalization technique called Bit Shifting PowerNorm (BSPN) and a novel shifting-based softmax called Power-of-Two softmax (PTsoftmax).
Based on these operations, we propose Sorbet, a novel language model that operates without relying on complex operations.
A comparison between Sorbet and previous works is shown in~\cref{comStr}.
To our knowledge, Sorbet is the first model designed for language tasks that overcomes the challenge of deploying transformer-based language models on neuromorphic hardware, which does not support complex operations like division and square roots.
In the training process, we apply knowledge distillation techniques to constrain the Sorbet model to binary weights, significantly compressing its size and reducing inference energy consumption. 
As a result, Sorbet is a highly practical model for resource-constrained devices.

Our tests on the General Language Understanding Evaluation (GLUE) benchmark~\cite{wangglue} demonstrate that Sorbet maintains stable performance while achieving energy savings of $27.16\times$ compared to BERT and $3.16\times$ compared to SpikeLM.
Our contributions are summarized as follows: 

\begin{itemize}
\vspace{-5pt}
\item We are the first to explore the neuromorphic hardware-compatible operators in transformer-based models, identifying the challenge of transferring ANNs with transformer structures into SNNs, which lies in operations like softmax and LN. Solving this challenge would complete the last step in enabling SNNs for natural language processing.
\item We propose PTsoftmax and BSPN, two operators that replace softmax and layer normalization. These operators rely on bit-shifting instead of expensive operations, making them compatible with neuromorphic hardware and further reducing the model's computational cost.
\item We present Sorbet, a transformer-based binary spiking language model derived from BERT. In addition to using these two operators, Sorbet incorporates other design innovations and a refined training process to achieve full quantization. Sorbet is designed for neuromorphic hardware, enabling energy-efficient inference with comparable performance to ANN counterparts.
\end{itemize}

\section{Related Work}
\paragraph{Transformer-based SNNs} 
Transformed-based SNNs leverage the energy efficiency of SNN architectures. Several transformer-based SNNs have been proposed for computer vision tasks, like Spikformer~\cite{zhouspikformer}, Spikeformer~\cite{li2024spikeformer}, Spike-driven Transformer~\cite{yao2024spike} and STCA-SNN~\cite{wu2023stca}.
To overcome the limitations of SNNs in handling complex operations, recent developments like Spikformer and Spike-driven Transformer leverage task-specific characteristics and integrate convolutional layers into the architectures. 
Meanwhile, transformer-based SNNs for NLP tasks have progressed more slowly. 
The importance of LN in NLP tasks underscores the challenges faced when adapting SNNs for such applications.
Recently models include SpikeBERT~\cite{lv2023spikebert}, SpikingBERT~\cite{bal2024spikingbert} and SpikeGPT~\cite{zhuspikegpt}. 
Among them, SpikeBERT employed even more layer normalization than the original BERT~\cite{devlin2019bert}, while SpikeGPT and spikingBERT used complicated operations like exponential operation and softmax.
In conclusion, none of these works can be practically deployed to neuromorphic hardware.

\paragraph{Quantized BERT}
Model quantization reduces the precision of model weights and activations, such as converting 32-bit floating-point numbers to 8-bit or 4-bit integers. Models like BinaryBERT~\cite{bai2021binarybert} and BiT~\cite{liu2022bit} have applied quantization to BERT, achieving notable improvements in model compression and energy efficiency.  
However, these quantization methods do not align with the requirements of SNNs, as they retain complex operations. 

\paragraph{Simplified Architecture}
Several approaches aim to simplify the transformer architecture, such as linear complexity attention mechanisms or hardware-efficient alternatives that offer potential pathways for adaptation~\cite{han2023flatten, dao2022flashattention, lu2021soft, katharopoulos2020transformers}. Additionally, methods have been proposed to simplify computationally-intensive operations within the transformer. For example, I-BERT~\cite{kim2021bert} and I-ViT~\cite{li2023vit} simplify activation functions, normalization functions, and softmax operations with approximation methods. 
However, these designs are difficult to realize with neuromorphic hardware that do not support multiplications and divisions.

\section{Preliminary}

\paragraph{Spiking Neural Networks}
SNNs, inspired by biological neural systems, use discrete events (spikes) for communication. 
Unlike traditional neural networks, SNNs emulate the spike-based communication of neurons, making them more biologically plausible. 
This mechanism allows efficient processing of temporal data and offers high energy efficiency, making SNNs ideal for applications like robotics, signal processing, and pattern recognition~\cite{kasabov2013dynamic, kim2018spiking, lobov2020spatial}.
However, their non-differentiable nature makes training challenging. Current approaches typically involve surrogate gradients or ANN-to-SNN conversion after training an ANN with a similar architecture. 
In either case, these methods leverage advanced ANN structures to construct analogous SNN models. 
Surrogate gradients approximate the gradients during backpropagation, enabling SNN training despite their discrete nature. These gradients smooth out non-differentiable spike events. ANN-to-SNN conversion involves transforming a trained ANN model into an SNN by mimicking ANN neuron behavior with the spikes. In Sorbet, we use ANN-to-SNN conversion.

\paragraph{Spike Neuron Model}
\label{SN}
The {\em integrate and ﬁre} (IF) model is the most popular spike neuron model for generating spike trains~\cite{buoptimal}. It offers a simple representation of how SNN neurons accumulate membrane potentials and output spikes. In the IF model, the membrane potential \(V\) of a neuron is treated as a capacitor that accumulates the input currents over time, a process that the following differential equation can describe:
\vspace{-5pt}
\begin{align}
\frac{dV}{dt} = I_{\text{syn}}(t)
\end{align}
Here, \(I_{\text{syn}}(t)\) represents the synaptic input current. When the membrane potential \(V\) exceeds a certain threshold $\theta$, the neuron generates a spike. In this paper, we adopt an enhanced IF model known as average IF spike generation (ASG)~\cite{yan2025low}. 
One advantage of ASG over the basic IF model is the reduction of memory access energy. Unlike the IF model, which accesses weights at each time step, ASG accumulates input spikes and calculates membrane potential in a single pass, requiring only one access to the weights. The algorithm for ASG can be found in~\cref{app:alg}~\cref{algASG}.

\paragraph{Challenges of Adapting Transformers to SNNs}
\label{sec:bg:challenge}
While SNNs are recognized for their energy efficiency and explainability, adapting transformer models to spike neurons presents a significant challenge. 
Spike neurons cannot directly perform essential operations such as multiplication and division, which are fundamental to traditional ANN-based transformers. Consequently, transformer layers in SNNs must be replaced with simpler operations, such as bit-shifting and addition, to ensure compatibility with neuromorphic hardware. This requires substituting ANN transformer layers with SNN-compatible counterparts that avoid complex operations but still maintain the functionality of the original architecture.

\section{Methods}

In this work, we design Sorbet, a spiking language model capable of handling NLP tasks and being practically deployed on neuromorphic hardware platforms.
To achieve this, we base the model architecture on transformers due to their proven effectiveness in NLP.
As previously discussed in \cref{sec:bg:challenge}, neuromorphic hardware does not support standard ANN operations.
So we first introduce BSPN (\cref{sec:method:bspn}) and PTsoftmax (\cref{sec:method:ptsoftmax}), which are fully compatible with neuromorphic hardware and provide high-quality results for replacing traditional LN and softmax.
We then describe the Sorbet architecture (\cref{sec:method:arch}) and the training process (\cref{sec:method:train}).

\subsection{Bit Shifting PowerNorm}
\label{sec:method:bspn}

Transformer-based language models like BERT typically employ LN. 
LN calculates the mean and variance across all features for each data point in a layer's input, normalizes the inputs, and applies a learnable scale and shift.
Due to hardware limitations discussed earlier, LN cannot be directly adopted in our model.
On the other hand, batch normalization (BN) is favored in SNNs because its learnable parameters can be merged into the weights during the inference stage, hence becomes SNN-hardware-friendly.
However, the relatively poor performance of BN makes it unsuitable for our model.

To improve BN's performance, PowerNorm~\cite{shen2020powernorm} introduced a relaxed zero mean BN.
However, PowerNorm incorporates {\em root-mean-square layer normalization} (RMSLN):
\begin{equation}
    \text{RMSLN}(\mathbf{x}) = \frac{\mathbf{x}}{\sqrt{\frac{1}{n} \sum_{i=1}^{n} x_i^2}}
\end{equation}
This is too resource-intensive and hence is not fully compatible with neuromorphic hardware.

To address the issue with PowerNorm, we define a novel normalization layer called {\em Bit Shifting PowerNorm} (BSPN), which replaces RMSLN and avoid complex square and square root operations.
We begin by grouping the input. Within each group, denote the vector as \( \mathbf{x} \in \mathbb{R}^n \), we calculate the L1 norm.

Then the L1 norm is approximated by the next power of two for hardware efficiency. This allows the ``dividing by L1 norm'' step to be implemented via bit shifting.
The approximation can be achieved either by taking $\log_2$ and then exponentiating it back, or more efficiently via a lookup table.
We then perform the relaxed zero-mean BN as in the PowerNorm.
To optimize hardware efficiency, the scaling factor $\frac{\gamma}{\psi}$ can be further quantized to a power of two.
The complete BSPN algorithm is outlined in~\cref{alg:BSPN}, where $y \odot X$ is denoted as $[y_1, X_1:,...,y_d, X_d:]$.

\begin{algorithm}
\caption{Bit Shifting PowerNorm (BSPN)} 
\begin{algorithmic}[1]
\label{alg:BSPN} 
\STATE 
    \textbf{Input:} $\mathbf{X} \in \mathbb{R}^{h\times n}$; Number of attention heads $h$;
\STATE
    \textbf{Output:} $\mathbf{Y}$;

\STATE \textbf{Step 1: Group Scaling} 
\STATE Group channels into $h$ groups.
\STATE $\mathcal{S} \gets \frac{1}{n}\sum_{i=1}^{n} |\mathbf{X}_i|$
\STATE $k \gets \lceil \log_2(\mathcal{S}) \rceil$ \COMMENT{Can be done by using a look-up table}
\STATE $\mathbf{X} \gets \mathbf{X} \gg k$

\STATE \textbf{Step 2: Normalization as PowerNorm}
\STATE \textbf{For Training:}
\STATE $\psi_B^2 \gets \frac{1}{B}\sum_{i=1}^{B}\mathbf{X}_i^2$
\STATE $\mathbf{\hat{X}} \gets \frac{\mathbf{X}}{\psi}$
\STATE $\mathbf{Y} \gets \gamma \odot \mathbf{\hat{X}} + \beta$
\STATE $\psi^2 \gets \alpha\psi^2 + (1 - \alpha){\psi_B}^2$ \COMMENT{Exponential moving average update for $\psi^2$}
\STATE \textbf{For Inference:}
\STATE $\mathbf{Y} \gets \gamma \odot \frac{\mathbf{X}}{\psi} + \beta$    
\end{algorithmic} 
\end{algorithm}

The two advantages of BSPN are that it uses only SNN-compatible operations and significantly simplifies the computation process. Firstly, like PowerNorm, BSPN incorporates the computation of runtime variance, which is then used during inference. 
Compared to LN, this eliminates the need for redundant calculations during inference. 

Secondly, BSPN avoids RMSLN by using the L1 norm and approximating the divisor as a power of two. 
These features make BSPN a practical design for SNN models and significantly reduce energy consumption, as shown in \cref{sec:method:arch} and \cref{sec:eval:energy}.

Next, we show that the BSPN is not only deployable on neuromorphic hardware but also possesses desirable properties that make it a suitable replacement for LN in transformer models.

\begin{definition}
Let \( \Phi \colon \mathbb{R}^n \to \mathbb{R}^n \) be defined by
\begin{equation}
    \Phi(X) \;=\;
    \frac{X}{\,2^{\,\Bigl\lceil \log_2\bigl(\mathbf{S}(X)\bigr)\Bigr\rceil}}\,,
    \quad
    \text{where}
    \quad
    \mathbf{S}(X) \;=\;
    \frac{1}{n}\sum_{i=1}^{n} \lvert X_i\rvert.
\end{equation}
\end{definition}
Since bounded gradients are necessary for convergence, we first analyze how the proposed bit-shifting-based step $\Phi(X)$ maintains PowerNorm's gradient boundedness.

\begin{theorem}[BSPN Preserves Bounded Gradient]
\label{thm:bspn-preserve-grad}
The loss $L_{\mathrm{PN}}$ under PowerNorm is bounded by a constant, denoted as $C$.
We define the BSPN loss by 
\begin{equation}
    \mathcal{L}_{\mathrm{BSPN}}(\mathbf{X})\;=\;\mathcal{L}_{\mathrm{PN}}\!\bigl(\Phi(\mathbf{X})\bigr),
\end{equation}
$\mathcal{L}_{\mathrm{BSPN}}$ also has a bounded gradient w.r.t.\ $\mathbf{X}_{:,i}$, specifically 
\begin{equation}
  \left\lVert \frac{\partial \mathcal{L}_{\mathrm{BSPN}}}{\partial \mathbf{\tilde{X}}_{:,i}} \right\rVert \le\;  C
  \quad\text{for all }\tilde{\mathbf{X}}.
\end{equation}
\end{theorem}

The detailed proof is provided in~\cref{proof1}.

To ensure stable training and effective generalization, it is important to control the Lipschitz constant of the loss function. In~\cite{shen2020powernorm}, the author proved that applying PowerNorm can lead to a smaller Lipschitz constant of the loss compared to BN under a mild assumption. BSPN exhibits similar behavior, as outlined below:
\begin{lemma}[1-Lipschitz Property of $\Phi(X)$]
\label{lem:bit-shift-lipschitz}
For any $X,\,Y \in \mathbb{R}^n$, under mild assumption, we have
\begin{equation}
    \bigl\| \Phi(X) - \Phi(Y)\bigr\|
    \;\leq\;
    \,\bigl\|X - Y\bigr\|.
\end{equation}
\end{lemma}
The assumption and detailed proof are provided in the~\cref{proof2}.
For the loss function $\mathcal{L}_{\mathrm{BSPN}}$, for convenience in comparison, we use $\psi_B$ instead of $\psi$. Since $\Phi(X)$ is 1-Lipschitz, BSPN does not increase the overall Lipschitz constant of loss compared to PowerNorm.

\begin{lemma}[Effect of BSPN on the Lipschitz Constant of the Loss]
\label{lemma:bspn-lipschitz}
With lemma~\ref{lem:bit-shift-lipschitz} and lemma 2 in~\cite{shen2020powernorm}, we further have:
\begin{align}
\left\lVert \frac{\partial \mathcal{L}_{\mathrm{BSPN}}}{\partial \mathbf{X}_{:,i}} \right\rVert^2 
&\leq 
\left\lVert \frac{\partial 
\mathcal{L}_{\mathrm{PN}}}{\partial \mathbf{X}_{:,i}} \right\rVert^2
\leq
\frac{\gamma_i^2}{\left(\psi_B\right)_i^2} 
\left\lVert \frac{\partial \mathcal{L}_{\mathrm{BN}}}{\partial \mathbf{X}_{\cdot,i}} \right\rVert^{2}.
\end{align}

\end{lemma}
From~\cite{shen2020powernorm}, the empirical result supports that $\frac{\gamma_i}{(\psi_B)_i} $ is typically smaller than 1. Therefore, the Lipschitz constant of $\mathcal{L}_{\mathrm{BSPN}}$ is also smaller than that of $\mathcal{L}_{\mathrm{BN}}$ in practice. These properties of BSPN imply that it enhances training stability by preventing gradient explosion or vanishing, ensuring controlled updates. This makes BSPN a robust and efficient alternative to LN.

\subsection{Power-of-Two Softmax}
\label{sec:method:ptsoftmax}
In transformer-based models, the softmax function plays a crucial role, especially in the attention mechanisms where it is used to calculate the distribution of attention weights across different inputs.
For a vector \( \mathbf{z} = [z_1, z_2, ..., z_n] \), softmax can be calculated as follows:

\begin{equation}
\text{softmax}(z_i) = \frac{e^{z_i}}{\sum_{j=1}^{n} e^{z_j}}
\end{equation}

Due to the complexity of the exponential and division operations involved in the softmax function, it is too sophisticated for neuromorphic hardware, making direct utilization of softmax in SNNs impractical. 
We propose a softmax alternative aligned with SNN computational conventions, enabling a more streamlined attention mechanism suitable for SNNs.

To approximate the softmax function, we first replace the exponential operation with powers of two, thus defining a base-2 softmax:
\begin{equation}
\text{Base-2 softmax}(z_i) = \frac{2^{z_i}}{\sum_{j=1}^{n} 2^{z_j}}
\end{equation}

Since the base-2 softmax still involves division, we further approximate $\sum_{j=1}^{n} 2^{z_j}$ as the nearest power of two $\tilde{Z}$:
\begin{equation}
k = \left\llbracket \left( \log_2\left(\sum_{j=1}^{n} 2^{\lceil z_j \rceil}\right) \right) \right\rrbracket, \quad \tilde{Z} = 2^{k}
\end{equation}

Here, $k$ is the integer logarithm base 2 of the sum, rounded up, ensuring $\tilde{Z}$ is the nearest power of two approximating the softmax denominator.
To facilitate bit-shifting, we also round up $z_i$. 
Our proposed purely power-of-two softmax (PTsoftmax) can then be defined as:
\begin{equation}
\text{PTsoftmax}(z_i) = \frac{2^{\lceil z_i \rceil}}{\sum_{j=1}^{n} 2^{\lceil z_j \rceil}} \approx 2^{\lceil z_i \rceil - k}
\end{equation}

Given \(z_i\) and \(k\), the operation \(2^{z_i - k}\) can be efficiently computed via a left shift operation.
The complete PTsoftmax computation is detailed in~\cref{alg4}.

\begin{algorithm}
\caption{Power-of-two Softmax (PTsoftmax)}
\begin{algorithmic}[1]
\label{alg4}

\STATE \textbf{Input:} Attention scores matrix $\mathcal{S} \in \mathbb{R}^{b \times s \times h}$\\
\STATE \textbf{Output:} Attention probabilities matrix $\mathcal{P} \in \mathbb{R}^{b \times s \times h}$

\STATE $\mathcal{S}^{\prime} \gets \lceil \mathcal{S} \rceil - \max( \lceil \mathcal{S} \rceil )$ \COMMENT{Normalize $\mathcal{S}$}

\STATE $\mathcal{A} \gets 2^{S^{\prime}}$
\STATE $\mathcal{Z} \gets \sum(2^{\mathcal{S}^{\prime}})$ \COMMENT{Compute the sum of exponentials}

\STATE $ k \gets \left\llbracket \log_2(\mathcal{Z}) ) \right\rrbracket $ 
\STATE $\mathcal{P} \gets \mathcal{A} \gg k$ \COMMENT{Use right shift to divide by $\tilde{\mathcal{Z}} = 2^{k}$}
\STATE \textbf{return} $\mathcal{P}$
\end{algorithmic}
\end{algorithm}

To analyze the approximation error of PTsoftmax relative to the original softmax, we
followed \cite{zhang2022base}, and used the generalized base-$\beta$ softmax function, $F_\beta(x_i)$, defined as:  
\begin{equation}
F_\beta(x_i) = \frac{\beta^{x_i}}{\sum_{j=1}^{N}\beta^{x_j}}
\end{equation}

Here, the traditional softmax corresponds to $F_e(x_i)$, and the base-2 softmax is $F_2(x_i)$.
According to \cite{zhang2022base}, both $F_e(x_i)$ and $F_2(x_i)$ exhibit similar behavior and can be used interchangeably.
In particular, $F_2(x_i)$ is well-suited to representing a probability distribution in $(0,1]$such that the probabilities sum to 1.
We then examine the approximation between $F_2(x_i)$ and PTsoftmax.

\begin{lemma}
\label{lem:ptsfm}
For all $i \in {0,1,..., n}$, we have $ \frac{1}{2\sqrt{2}} F_2(x_i) \leq \text{PTsoftmax}(x_i) \leq 2\sqrt{2}F_2(x_i)$.
\end{lemma}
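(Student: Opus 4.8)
The plan is to reduce the normalized, multi-element inequality about $\mathrm{PTSoftmax}$ to a single-element, \emph{pointwise} sandwich on the power-of-two approximation that $\mathrm{PTSoftmax}$ substitutes for the true exponential $2^{x}$. Writing $p(\cdot)$ for this approximation, so that $\mathrm{PTSoftmax}(x_i)=p(x_i)\big/\sum_{j}p(x_j)$ and $F_2(x_i)=2^{x_i}\big/\sum_{j}2^{x_j}$, I would first prove a bound of the form
\begin{equation}
\tfrac{1}{2}\,2^{x}\;\le\;p(x)\;\le\;\sqrt{2}\,\,2^{x}\qquad\text{for every }x,
\end{equation}
i.e. that the relative error $p(x)/2^{x}$ never leaves an interval whose upper-to-lower ratio is exactly $2\sqrt2=2^{3/2}$, the constant appearing in the statement. (Depending on the precise rounding/shift rule the endpoints might instead be the multiplicatively symmetric pair $2^{\pm 3/4}$; what the lemma actually needs is only that their ratio equal $2^{3/2}$, so either normalization works.)

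Establishing this pointwise bound is the first and main step. I would decompose $x=\lfloor x\rfloor+t$ with $t\in[0,1)$, write $2^{x}=2^{\lfloor x\rfloor}2^{t}$, and factor out the exactly representable integer part $2^{\lfloor x\rfloor}$, so that the claim becomes a statement purely about the fractional ``mantissa'' term: $p(x)/2^{x}=q(t)\,2^{-t}$, where $q(t)$ is whatever linear/shift rule $\mathrm{PTSoftmax}$ applies on $[0,1)$. The extremes are then found by maximizing and minimizing the single-variable function $q(t)2^{-t}$ over $t\in[0,1)$ and checking that its range lies in $[\tfrac12,\sqrt2]$. I expect this extremal calculation to be the crux: the constant $2^{3/2}$ is tight only if the maximizer and minimizer of $q(t)2^{-t}$ are located correctly, so care is needed both at the interval endpoints $t=0,1$ and at any interior critical point where $q'(t)-q(t)\ln 2$ vanishes.

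With the pointwise bound in hand the remaining steps are routine and purely algebraic. Applying the upper bound to the numerator $p(x_i)$ and the lower bound to \emph{every} summand of the denominator $\sum_j p(x_j)$ — legitimate because the bound is uniform in the index and sums preserve inequalities — yields
\begin{equation}
\mathrm{PTSoftmax}(x_i)=\frac{p(x_i)}{\sum_j p(x_j)}\;\le\;\frac{\sqrt2\,\,2^{x_i}}{\tfrac12\sum_j 2^{x_j}}\;=\;2\sqrt2\,F_2(x_i).
\end{equation}
The lower bound follows symmetrically, using the lower bound on the numerator and the upper bound on each denominator term, giving $\mathrm{PTSoftmax}(x_i)\ge \tfrac{1}{2\sqrt2}F_2(x_i)$. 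Since neither the pointwise bound nor this combination depends on the particular index, both inequalities hold simultaneously for all $i\in\{0,1,\dots,n\}$, which is exactly the claim.

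Finally, I would sanity-check the two extreme input configurations — one score dominating the rest, and all scores equal — to confirm the argument cannot be tightened below $2\sqrt2$ and that no assumption on the sign or magnitude of the $x_i$ is hidden: the normalization cancels any common additive shift in the scores, so the bound is automatically uniform over all inputs.
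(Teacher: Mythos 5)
There is a genuine gap, and it stems from your assumed form of $\mathrm{PTSoftmax}$. You posit $\mathrm{PTSoftmax}(x_i)=p(x_i)/\sum_j p(x_j)$ with a pointwise power-of-two approximation $p$ applied independently to every term of the denominator, and then try to extract the whole constant $2\sqrt2$ from a single pointwise sandwich $\tfrac12\,2^x\le p(x)\le\sqrt2\,2^x$. But the operator the lemma is actually about is $\mathrm{PTSoftmax}(x_i)=2^{\lceil x_i\rceil-k}$, where $k=\bigl[\log_2\bigl(\sum_j 2^{x_j}\bigr)\bigr]$ is the \emph{rounded logarithm of the sum}: the denominator is a single power of two obtained by quantizing the log of the normalizer, not a sum of pointwise-approximated terms. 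Consequently the constant $2\sqrt2$ decomposes as $2\cdot\sqrt2$, with the two factors coming from two structurally different approximations: the paper first compares $F_2(x_i)$ with $F_2(\lceil x_i\rceil)$ (ceiling each exponent costs at most a factor $2$ in the normalized ratio, since $2^{x}\le 2^{\lceil x\rceil}<2\cdot 2^{x}$ hits the numerator and denominator in opposite directions), and then compares $F_2(\lceil x_i\rceil)$ with $2^{\lceil x_i\rceil-k}$, where replacing the sum by its nearest power of two perturbs it by at most $2^{\pm 1/2}$. Your framework has no place for this second, log-domain rounding step, which is precisely where the $\sqrt2$ originates; your hedged endpoints $[\tfrac12,\sqrt2]$ for $p(x)/2^x$ do not arise from any actual rounding rule (ceiling gives $[1,2)$, nearest-integer rounding gives $[2^{-1/2},2^{1/2}]$), so the crux of your plan --- the extremal analysis of $q(t)2^{-t}$ --- cannot be carried out to produce the stated constants.

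What does survive is your final combination step: bounding the numerator from above and every denominator contribution from below (and vice versa) is exactly how the paper handles its first stage, and chaining the two stages as $\tfrac12 b_i\le a_i\le 2b_i$ and $\tfrac{1}{\sqrt2}c_i\le b_i\le\sqrt2 c_i$ to get $\tfrac{1}{2\sqrt2}c_i\le a_i\le 2\sqrt2\,c_i$ is the same routine algebra you describe. To repair the proof you need to (i) state the correct definition of $\mathrm{PTSoftmax}$, (ii) prove the factor-$2$ bound between the true base-2 softmax and the softmax of ceilings, and (iii) add the missing lemma that $\bigl|\log_2\bigl(\sum_j 2^{\lceil x_j\rceil}\bigr)-k\bigr|\le \tfrac12$, which delivers the remaining $\sqrt2$.
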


A detailed proof is given in~\cref{proofsfm}.
~\cref{lem:ptsfm} shows that the approximation error of PTsoftmax remains within a constant factor of the traditional softmax, confirming its practical applicability. Unlike classic softmax, PTsoftmax does not strictly sum to 1, but theoretical analysis and experimental results (\cref{ablationStudy}) indicate that this discrepancy has a minor impact on performance.

\begin{figure*}[h]
    \centering
    \includegraphics[width=0.85\linewidth]{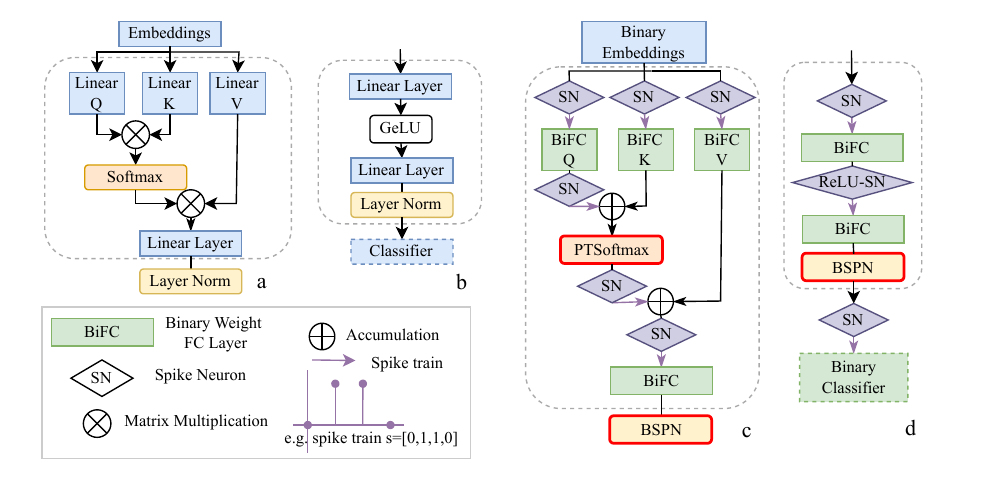}
    \caption{Comparison of the architecture of BERT and Sorbet. (a) The multi-head self-attention block of BERT; (b) The feed-forward network of BERT; (c) The spiking multi-head self-attention of Sorbet; (d) The spiking feed-forward network of Sorbet. All the outputs of $\mathcal{SN}$ are spike trains to ensure Sorbet is multiplication-free. The red-bordered box highlights our proposed operations.
}
    \label{fig:main}
\end{figure*}

\subsection{Sorbet Architecture}
\label{sec:method:arch}
We now present Sorbet, our spiking language model.
At a high level, Sorbet resembles a standard transformer design but is customized for the SNN platform by replacing various layers with SNN-compatible ones, including BSPN (\cref{sec:method:bspn}), PTsoftmax (\cref{sec:method:ptsoftmax}), and ReLU, equipped with additional spike neurons and avoiding multiplication with accumulation operations. 
\cref{fig:main} shows the architecture of Sorbet along with the current ANN design (e.g., BERT).

Each activation is encoded using spike neurons to generate spike trains. Our spiking self-attention mechanism is then defined as:
\begin{equation}
\begin{aligned}
& \text{SpikingAttn}(x) = \\
& \mathcal{SN}(\text{PTsoftmax}(\alpha * \mathcal{SN}(Q) K^T) )V
\end{aligned}
\end{equation}

where $Q, K, V$ are derived from a quantized binary-weight linear function, and $\alpha$ is a constant that can be merged into the weights.
$\mathcal{SN}$ denotes the spike neuron used to generate spike trains.
In Sorbet, at the position of matrix multiplication in the ANN, one of the multiplicands is encoded into a spike train.
Matrix multiplications involving $\mathcal{SN}$ are simplified to accumulations over the indices corresponding to spike events (1s in the spike train).

In sub-layers originally defined as $\textit{LayerNorm}(x + \textit{Sublayer}(x))$, we instead use $\textit{BSPN}(\mathcal{SN}(x) + \textit{BinaryLayer}(\mathcal{SN}(x)))$. To preserve performance after quantization, we also adjust the training process. Notably, existing neuromorphic chips such as Loihi, IBM TrueNorth, and NeuroGrid~\cite{davies2018loihi, akopyan2015truenorth, benjamin2014neurogrid} already support spiking and bit-shifting operations, making them well suited for these innovations.

\subsection{Training Process}
\label{sec:method:train}
Firstly, to enhance the energy efficiency of our model and enable the encoding of all activations into spike trains, we quantize all weights to 1 bit and activations to 4 bits. 
For model quantization, ~\cite{liu2022bit} introduced an elastic binarization function with a scale factor $\alpha$ and threshold $\beta$:
\begin{equation}
X^i_B = \alpha \hat X^i_B = \alpha \lfloor \text{Clip}(\frac{X^i_R - \beta}{\alpha}, 0, 1) \rfloor
\end{equation}
However, dividing the input by \(\alpha\) is impractical for SNN inference. 
Hence, following the same rationale as PTsoftmax, we approximate \(\alpha\) with the nearest power of two, \( Z = 2^{k_\alpha} \). 
Accordingly, the elastic binarization function becomes: 
\begin{equation}
X^i_B = \left\lfloor \text{Clip}\left( (X^i_R - \beta) \gg k_\alpha, 0, 1 \right) \right\rfloor \ll k_\alpha
\end{equation}

Inspired by ~\cite{liu2022bit}, we adopt a hybrid training strategy that combines standard knowledge distillation with the distillation of intermediate activations. The overall loss function is \( L = L_{\text{logits}} + L_{\text{reps}} \), where \( L_{\text{logits}} \) employs the Kullback-Leibler (KL) divergence to facilitate learning from the teacher model to the student model while \( L_{\text{reps}} \) is used to accelerate convergence and improve transfer and generalization capabilities~\cite{aguilar2020knowledge}. 
Concretely,
\begin{equation}
L_{\text{logits}} = \text{KL}(p, q),\
L_{\text{reps}} = \sum_i \|r_i^s - r_i^t\|^2
\end{equation}
where \( p \) denotes the output distribution of the teacher model, and  \( q \) represents the output of the student model. 
\( r_i^s \) and \( r_i^t \) are the corresponding transformer block output activations from the student and teacher models, respectively. The backpropagation can be calculated as:
\begin{equation}
\begin{aligned}
\frac{\partial L}{\partial w} 
&= \sum_i \left( \frac{\partial L}{\partial p_i} \frac{\partial p_i}{\partial w} + \frac{\partial L}{\partial q_i} \frac{\partial q_i}{\partial w} + \frac{\partial L}{\partial r_i^s} \frac{\partial r_i^s}{\partial w} + \frac{\partial L}{\partial r_i^t} \frac{\partial r_i^t}{\partial w} \right) \\
&= \sum_i \left( \left( \log \left( \frac{p_i}{q_i} \right) + 1 \right) \frac{\partial p_i}{\partial w} - \frac{p_i}{q_i} \frac{\partial q_i}{\partial w} \right) \\
& + \sum_i \left(2 (r_i^s - r_i^t) \frac{\partial r_i^s}{\partial w} - 2 (r_i^s - r_i^t) \frac{\partial r_i^t}{\partial w} \right)
\end{aligned}
\end{equation}

Then we integrate BSPN and PTsoftmax step by step into the model. For each step, we perform model distillation. Finally, the quantized model with these energy-efficient components is transformed into the Sorbet model by passing it through spiking neurons. This entire procedure is summarized in \cref{alg: distillation}.

\begin{algorithm}
\caption{Multi-step distillation}
\label{alg: distillation}
\begin{algorithmic}[1]
\STATE \textbf{Input:} Full-precision model $M_0$, dataset $\mathcal{D}$
\STATE \textbf{Output:} Sorbet $\mathcal{S}$


\STATE $M_1 \leftarrow \text{Quantize}(M_0)$ \COMMENT{Quantize $M_0$ to 1-bit weight 4-bit activation}
\STATE $M_2 \leftarrow M_1$ with PTsoftmax replacing softmax
\STATE $M_3 \leftarrow M_2$ with BSPN replacing LN


\FOR{$i = 1 \to 3$}
    \STATE $M_{\text{teacher}} \leftarrow M_{i-1}$, $M_{\text{student}} \leftarrow M_{i}$
    \STATE ModelDistill$(M_{\text{student}}, M_{\text{teacher}}, \mathcal{D})$
\ENDFOR


\STATE Convert $M_3$ to SNN and obtain Sorbet $\mathcal{S}$
\STATE \textbf{return} $\mathcal{S}$
\end{algorithmic}
\end{algorithm}
The distillation process yields a quantized model with higher accuracy. Furthermore, as shown in Table~\ref{abl1}, our proposed PTsoftmax and BSPN can also be applied and trained directly.

\begin{table*}[h]
\centering
\begin{tabular}{@{}lccccccccc@{}}
\toprule
Model                       & Size  & QQP   & MNLI-m & SST-2 & QNLI  & RTE   & MRPC  & STS-B   \\ \midrule
$\text{BERT}_\text{base}$~\cite{devlin2019bert}   &  418M & 91.3  & 84.7   & 93.3  & 91.7  & 72.6  & 88.2  & 89.4    \\
DistilBERT~\cite{sanh2019distilbert}         &  207M  & 88.5  & 82.2   & 91.3  & 89.2  & 59.9  & 87.5  & 86.9    \\
$\text{TinyBERT}_\text{6}$~\cite{jiao2020tinybert}   &  207M & -  & 84.6   & 93.1  & 90.4  & 70.0  & 87.3  & 83.7    \\

Q2BERT~\cite{zhang2020ternarybert}                      & 43.0M &  67.0 & 47.2   & 80.6  & 61.3  & 52.7  & 68.4  & 4.4    \\
BiT~\cite{liu2022bit}                       & 13.4M & 82.9  & 77.1   & 87.7  & 85.7  & 58.8  & 79.7 & 71.1    \\

SpikingFormer~\cite{zhou2023spikingformer}              &  *    & 83.8  & 67.8   & 82.7  & 74.6  &  58.8 & 74.0  & 72.3  \\
SpikingBERT~\cite{bal2024spikingbert}                 & 50M   & 86.8  & 78.1   &  88.2 & 85.2  & 66.1  & 79.2  & 82.2  \\

SpikeLM~\cite{xingspikelm}                     &   *   & 87.9  & 76.0   & 86.5  & 84.9  & 65.3  & 78.7 & 84.3   \\
\midrule
1-bit SpikingBERT~\cite{bal2024spikingbert}           &     * & 83.8  & 75.4   & 86.7  & 80.5  & -     & 75.8 & -       \\
1-bit SpikeLM~\cite{xingspikelm}               &   *   & 87.2  & 74.9   & 86.6  & 84.5  & 65.7  & 78.9 & 83.9    \\

Sorbet $\ddagger$                & 13.4M &  83.4  & 75.8 & 89.6  &  84.6 & 59.2 &78.4 &  73.6 \\
Sorbet                      &13.4M  & 86.5  & 77.3   & 90.4  & 86.1  & 60.3  & 79.9 & 78.1   \\ 

\bottomrule
\end{tabular}
\caption{Comparison with the baseline on the GLUE benchmark. * denotes that the size is not reported in the original work. We report Spearman correlations for the STS-B dataset and accuracy for all other datasets. $\ddagger$ denotes further quantizing BSPN’s scaling factor $\frac{\gamma}{\psi}$ to a power of two.}
\label{tbl2}
\end{table*}

\section{Result}
In this section, we present the performance of Sorbet on seven datasets from the GLUE benchmark, a standard evaluation suite used by plenty of language models. Because SNNs are relatively limited when applied to NLP tasks, there are few existing SNNs evaluated on GLUE, so we compare Sorbet against both SNN and quantized ANN baselines. Our experiments use BERT-base as the initial teacher model for distillation. We also provide a comprehensive analysis of the model’s energy and power efficiency. All experiments were conducted on three Nvidia RTX A100 GPUs, each with 80GB of memory. The number of timesteps used for all results in this section is 16.

\subsection{Comparing with the Baseline}
\cref{tbl2} shows Sorbet’s performance on the GLUE benchmark, indicating that it maintains competitive results overall. It achieves state-of-the-art performance on four datasets and attains comparable results on the rest. Compared to binary neural networks such as BiT, Sorbet has the same model size and comparable performance but offers better energy efficiency and compatibility with neuromorphic hardware.

Additionally, we also evaluated two existing SNNs, namely, SpikeLM and SpikingBERT, on GLUE with 1-bit weight quantization. Although they claim to be SNN architectures with spike-generation methods, both depend heavily on LN and softmax operations, which are incompatible with SNN hardware. By contrast, Sorbet avoids these constraints and is consequently more suitable for neuromorphic devices.

    

\begin{table*}[ht!]
\centering
\caption{Ablation study on PTsoftmax and BSPN in full precision ANNs}
\label{abl1}
\begin{tabular}{@{}lcccccccc@{}}
\toprule
Model               & QQP    & MNLI-m                  & SST-2 & QNLI  & RTE    & MRPC   & STS-B  & Avg.\\
\midrule
BERT-softmax-LN                & 91.3   & 84.7                    & 93.3  & 91.7  & 72.6   & 88.2   & 89.4   & 87.3\\
BERT-PTsoftmax-LN      & 90.8   & 83.9                    & 91.4  & 90.8  & 71.5   & 85.3   & 87.6   & 85.9\\
BERT-PTsoftmax-BSPN & 89.7   & 80.9  & 91.7  & 87.4  & 69.0   & 81.9   & 84.4   & 83.6\\
\bottomrule
\end{tabular}
\end{table*}

\subsection{Energy Saving Analysis}
\label{sec:eval:energy}
Sorbet achieves substantial energy efficiency in three ways. Firstly, SNN reduces overall energy consumption due to its event-driven nature, activating neurons only when necessary. Secondly, PTsoftmax and BSPN replace traditional functions with lower-cost operations, further reducing energy use. Finally, the use of model quantization further reduces computational load and power requirements.

\begin{figure}[h!]
    \centering
    \caption{Energy cost of different operations. Each value represents a single execution with an input dimension of 128. Based on 45nm technology, a FIX8 division requires 0.59 pJ, whereas a bit-shift operation requires only 0.024 pJ.}
    \includegraphics[width=0.98\linewidth]{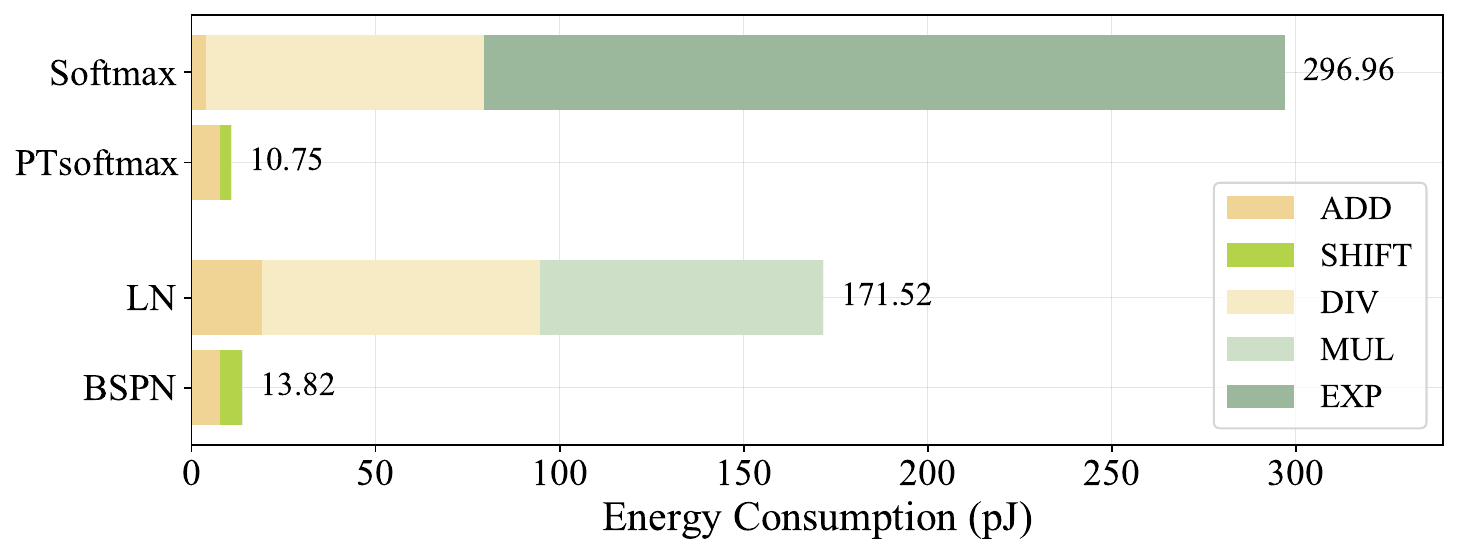}
    \label{fig_energy2}
\end{figure}

\begin{figure}[h!]
    \centering
    \caption{Spike firing rate for the output of each block on SST-2 and STS-B datasets.}
    \includegraphics[width=0.98\linewidth]{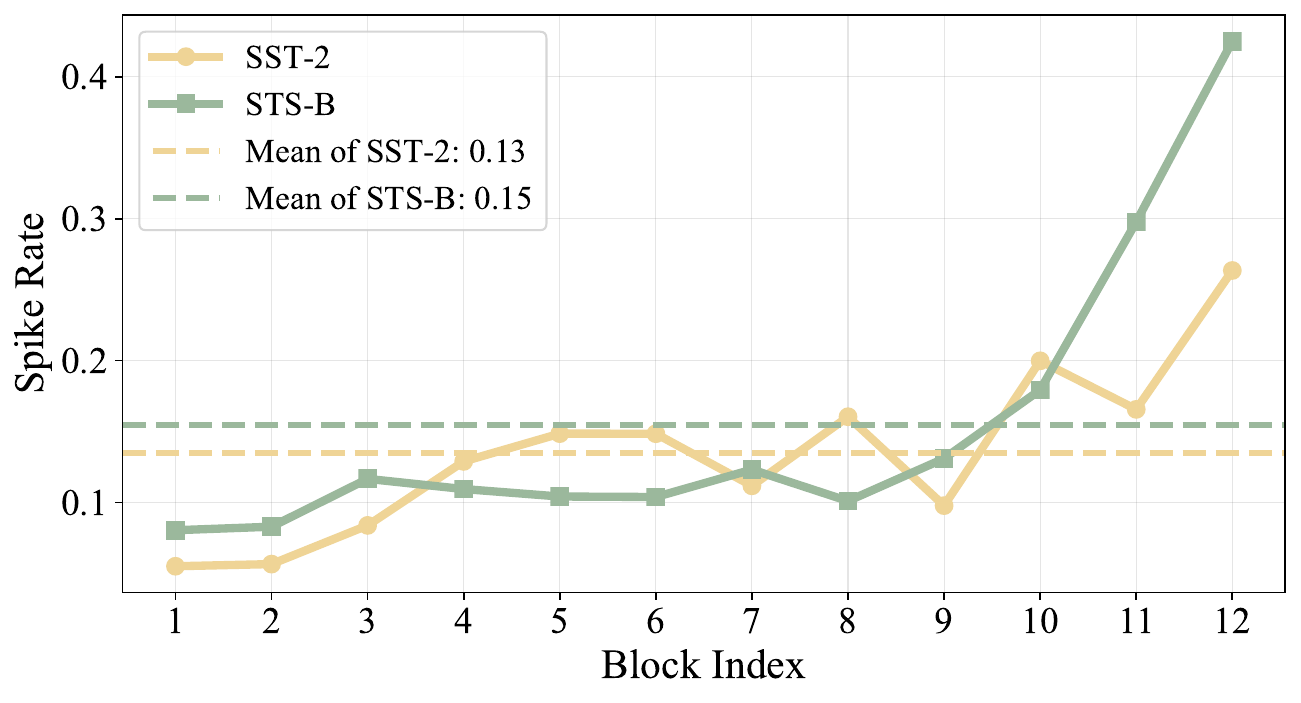}
    \label{fig: spike rate}
\end{figure}

\begin{table}[ht!]
\centering
\caption{Energy cost comparison with various baselines. Sorbet is first evaluated on the STS-B dataset, while * denotes usage on the SST-2 dataset.}
\vspace{0.2cm}
\label{fig_energy1}
\begin{tabular}{@{}lcccc@{}}
\toprule
Model               & BERT   & LIF-BERT &  SpikeLM\\
\midrule
FP32 & 51.41    & 7.98   & 3.98 \\
FP16 &15.21  & 3.55   & 1.77  \\
\hline
& \multicolumn{3}{c}{\textbf{Ours}}\\
\hline
1-Bit & \multicolumn{3}{c}{0.65 / 0.56 *}\\
\bottomrule
\end{tabular}
\vspace{-0.4cm}
\end{table}

Compared to ANNs, the key energy-saving advantage of SNNs lies in replacing multiplications with additions. The numbers of addition needed in Sorbet($N_{\text{Sorbet}}$) to substitute matrix multiplication in BERT($N_{\text{BERT}}$) can be expressed as:
\begin{align}
\label{cal}
N_{\text{Sorbet}} = T \cdot r \cdot N_{\text{BERT}}
\end{align} 

where $T$ is the timestep and $r$ is the spike rate. From Eq.~\ref{cal}, $N_{\text{Sorbet}}$ is highly related to the spike rate $r$. For example,
on the SST-2 and STS-B datasets, the observed average spike rates are 0.13 and 0.15, respectively, as illustrated in~\cref{fig: spike rate}. Notably, spike rates can be higher when using symmetric quantization. With these observed spike rates, we calculated Sorbet’s energy consumption and compared it with other baselines in~\cref{fig_energy1}. Sorbet reduces energy consumption by $27.16\times$ compared to BERT and by $3.16\times$ compared to SpikeLM.


The simplification operations we propose also substantially reduce the computational load of these functions. Using the reported energy costs of DIV, EXP, and SHIFT operations~\cite{you2020shiftaddnet, nadh2023energy, wu2019seco}, we calculated the energy consumption of our PTsoftmax and BSPN compared to the standard softmax and LN in \cref{fig_energy2}. The results show that PTsoftmax achieves approximately $27.62\times$ better energy efficiency than conventional softmax, while BSPN achieves $12.4\times$ better energy efficiency than LN. Detailed calculations are provided in \cref{app:energy}.







\subsection{Ablation Study}
\label{ablationStudy}
To evaluate the contribution of our proposed components, we conducted a series of ablation experiments. Specifically, we focused on the effectiveness of the PTsoftmax and BSPN modules. We conducted two ablation studies to evaluate the impact of our proposed modifications. First, we replaced the softmax and LN components in the full-precision BERT model with our PTsoftmax and BSPN, respectively. The performance results of this replacement are detailed in~\cref{abl1}. The impact caused by the two components is equivalent to the model performance. Compared to our main result on Sorbet in~\cref{tbl2}, the accuracy drop from full precision BERT to Sorbet is mainly caused by the quantization of weight and spike generation process, not by the replacement of softmax and normalization. Exploring more accurate ways to perform model quantization and spike generation can be a potential future work.

Second, we tested the effectiveness of our components in highly quantized BERT models on the SST-2 datasets. The results are presented in~\cref{abl2}. Our proposed PTsoftmax and BSPN can maintain a good performance on full precision and highly quantized models. 
We also performed ablation study on quantization level and timestep with the conversion loss on SST-2 dataset in \cref{appendixE}.


\begin{table}[t]
\centering
\caption{Ablation study on the impact of PTsoftmax and BSPN. $\delta$ is the accuracy drop compared to a model using Softmax and LayerNorm at the same precision. All models use 1-bit weights; `Bits' in this table denotes the activation quantization bit-width.}
\label{abl2}
\begin{tabular}{c|c|c|c}
\toprule
PTsoftmax  & BSPN  & Accuracy (\%) & $\delta$ \\ 
\midrule
\multicolumn{4}{c}{\textbf{Bits = 4}} \\
\hline
$\times$  & $\times$  & 91.5 & - \\
$\checkmark$ & $\times$  & 90.8 & 0.7 \\
$\times$ & $\checkmark$  & 91.2 & 0.3 \\
$\checkmark$ & $\checkmark$  & 90.9 & 0.6 \\
\hline
\multicolumn{4}{c}{\textbf{Bit = 1}} \\
\hline
$\times$  & $\times$  & 81.2 & - \\
$\checkmark$ & $\times$  & 80.0 & 1.2 \\
$\times$ & $\checkmark$  & 79.9 & 1.3 \\
$\checkmark$ & $\checkmark$  & 79.8 & 1.4 \\
\bottomrule
\end{tabular}
\end{table}

\section{Discussion}

Although we do not have access to physical neuromorphic chips, to demonstrate the neuromorphic hardware compatibility of our proposed model, we have implemented and validated the PTsoftmax and BSPN layers using the Lava framework, targeting Intel's Loihi architecture. We further synthesized PTsoftmax and BSPN layers in Verilog for a commercial 22 nm process to estimate power consumption as in \cref{app:energy}. While these methods demonstrate substantial energy-efficiency gains, they cannot fully capture real hardware effects. Future work will prioritize deployment on actual neuromorphic platforms to obtain empirical power and latency measurements, validate our simulation fidelity, and guide hardware-aware refinements.

\section{Conclusion}
In this paper, we presented Sorbet, the first fully neuromorphic hardware-compatible, transformer-based spiking language model. Sorbet addresses the challenge of adapting transformer-based models for energy-efficient computation by replacing traditional energy-intensive operations like softmax and layer normalization with our novel PTsoftmax and BSPN. This challenge has been largely overlooked in previous works in this field. Furthermore, by using techniques such as knowledge distillation and model quantization, we achieved a highly compressed binary weight model, further optimizing the model for real-world deployment on neuromorphic hardware. 
When evaluated on the GLUE benchmark, Sorbet achieved performance comparable to state-of-the-art models with substantial energy savings.

At the time of writing this paper, DeepSeek has become a global phenomenon. DeepSeek-V3~\cite{ds2} uses quantization to FP8 on Nvidia GPUs and standard floating point softmax and normalization operations, while DeepSeek-R1~\cite{ds1} innovates in the training and distillation processes to achieve energy and resource-efficient models. We believe that the techniques of Sorbet can be adapted to make them even more so, especially for inference on customized embedded architectures. This shall be the focus of our future work.

\section*{Acknowledgment}
This research is supported by the Ministry of Education, Singapore, under the Academic Research Fund Tier 1 (FY2024).

\section*{Impact Statement}
This paper presents work whose goal is to advance the field of machine learning. There are many potential societal consequences of our work, none of which we feel must be specifically highlighted here.

\bibliography{main}
\bibliographystyle{icml2025}

\newpage
\appendix
\onecolumn
\section{Spike Generation}
\label{app:alg}
We provide the detailed spike generation method we adopted in algorithm \ref{algASG}, where $w_{ij}^l$ is the weight of layer $l$ from neuron $i$ to neuron $j$, $b_i^l$ is the bias of the neuron $i$ in layer $l$, $s_i^l$ is the input spike train of the neuron $i$, and $T$ is the time window size. 

\begin{algorithm}
	\caption{Average IF model} 
	\label{algASG} 
	\begin{algorithmic}[1]
    
\STATE \textbf{Input:} Weight $w_{ij}^l$, bias $b_i^l$, input spike train $s_i^l$, threshold $\theta$, membrane potential $U_i^l(t)$ at timestep $t$, time window size $T$;
\STATE \textbf{Output:} Spike train $s_j$;

\STATE $U^l_i(t) \gets \sum_{j} (w_{ij}^l \cdot s^l_i(t) + b_i^l)$
\STATE $A_i^l \gets \sum_{t=1}^{T}U^l_i(t)/T $

\STATE $s^l_i(0) \gets 0$, $V^l_i(0) \gets 0$;
\FOR {$t \gets 1$ to $T$}

\STATE $V^l_i(t) \gets V^l_i(t-1) + A_i^l$
\STATE $
s^{l+1}_j(t) \gets \begin{cases}
1, & V^l_i(t) \geq \theta\\
0, & \text{otherwise}
\end{cases};
$
\STATE $V^l_i(t) \gets \begin{cases}
V^l_i(t) - \theta, & V^l_i(t) \geq \theta\\
V^l_i(t), & \text{otherwise}
\end{cases}.$ 
\ENDFOR 
\STATE \textbf{return} $s^{l+1}_j$ \COMMENT{Output the ASG spike train of neuron $i$}

\end{algorithmic}
\end{algorithm}

\section{Theoretical Results}
\subsection{Proof of BSPN}

\begin{assumption}[Power-of-Two Condition on Normalization Input]
\label{assump}
For a Transformer-based SNN, the input to the normalization layer $X$ satisfies
\[
\bigl\lceil \log_{2}\bigl(\mathbf{S}(\mathbf{X})\bigr)\bigr\rceil \ge 0,
\]
where
\[
\mathbf{S}(\mathbf{X})
= \frac{1}{n}\sum_{i=1}^{n}\lvert X_i\rvert.
\]
\end{assumption}
This condition is mild and consistent with our empirical observations in Transformer-based SNNs. The empirical results are shown in \cref{empirical}.

Based on these results, we have the following proof of Theorem~\ref{thm:bspn-preserve-grad} and Lemma~\ref{lem:bit-shift-lipschitz}.


\begin{proof}[Proof of~\cref{thm:bspn-preserve-grad}]
\label{proof1}
Since \( L_{\mathrm{BSPN}}(\mathbf{X}) = L_{\mathrm{PN}}(\Phi(\mathbf{X})) \), based on chain rule, we have:
\begin{equation}
\frac{\partial L_{\mathrm{BSPN}}(\mathbf{X})}{\partial \mathbf{X}} = \frac{\partial L_{\mathrm{PN}}(\Phi(\mathbf{X}))}{\partial \Phi(\mathbf{X})} \cdot \frac{\partial \Phi(\mathbf{X})}{\partial \mathbf{X}}.
\end{equation}

\begin{equation}
    \frac{\partial \Phi(\mathbf{X})}{\partial \mathbf{X}} =
    \frac{\frac{\partial \mathbf{X}}{\partial \mathbf{X}} \cdot 2^{\lceil \log_2(\mathbf{S}(\mathbf{X})) \rceil} 
    - \mathbf{X} \cdot \frac{\partial}{\partial \mathbf{X}} 2^{\lceil \log_2(\mathbf{S}(\mathbf{X})) \rceil}}
    {\big(2^{\lceil \log_2(\mathbf{S}(\mathbf{X})) \rceil} \big)^2}.
\end{equation}
Since
\begin{equation}
    \frac{\partial \mathbf{X}}{\partial \mathbf{X}} = I, \quad \text{and} \quad
    \frac{\partial}{\partial \mathbf{X}} 2^{\lceil \log_2(\mathbf{S}(\mathbf{X})) \rceil} = 0 \quad \text{(almost everywhere)},
\end{equation}
we obtain
\begin{equation}
    \frac{\partial \Phi(\mathbf{X})}{\partial \mathbf{X}} = \frac{I_n}{2^{\lceil \log_2(\mathbf{S}(\mathbf{X})) \rceil}}.
\end{equation}

Thus, the gradient of \( L_{\mathrm{BSPN}}(\mathbf{X}) \) can be expressed as:
\begin{equation}
\frac{\partial L_{\mathrm{BSPN}}(\mathbf{X})}{\partial \mathbf{X}} = \frac{I}{2^{\lceil \log_2(\mathbf{S}(\mathbf{X})) \rceil}} \cdot \frac{\partial L_{\mathrm{PN}}(\Phi(\mathbf{X}))}{\partial \Phi(\mathbf{X})}.
\end{equation}

As the gradient of any $\mathbf{X}$ is bounded for $L_{\mathrm{PN}}$(see section 4.2 in~\cite{shen2020powernorm}):
\begin{equation}
\left\| \frac{\partial L_{\mathrm{BSPN}}(\mathbf{X})}{\partial \mathbf{X}} \right\|_2 \leq \frac{1}{2^{\lceil \log_2(\mathbf{S}(\mathbf{X})) \rceil}} \cdot C
\end{equation}

Now, under the mild~\cref{assump}, we have:
\begin{equation}
\lceil \log_2(\mathbf{S}(\mathbf{X})) \rceil \geq 0.
\end{equation}
Thus,
\begin{equation}
2^{\lceil \log_2(\mathbf{S}(\mathbf{X})) \rceil} \geq 1 \quad \Rightarrow \quad \frac{1}{2^{\lceil \log_2(\mathbf{S}(\mathbf{X})) \rceil}} \leq 1.
\end{equation}

Therefore, we conclude:
\begin{equation}
\left\| \frac{\partial L_{\mathrm{BSPN}}(\mathbf{X})}{\partial \mathbf{X}} \right\|_2 \leq C.
\end{equation}

This establishes the gradient bound.
\end{proof}

\begin{proof}[Proof of~\cref{lem:bit-shift-lipschitz}]
\label{proof2}
Let \( k_X = \lceil \log_2\bigl(\mathbf{S}(X)\bigr)\rceil \) and \( k_Y = \lceil \log_2\bigl(\mathbf{S}(Y)\bigr)\rceil \), under~\cref{assump}, we have \(k_X, k_Y \geq 0\), hence
\begin{equation}
\frac{1}{2^{k_X}}, \frac{1}{2^{k_Y}} \leq 1.
\end{equation}
Therefore, we have:
\begin{align}
\bigl\| \Phi(X) - \Phi(Y)\bigr\| 
= \bigl\| \frac{X}{2^{k_X}} - \frac{Y}{2^{k_Y}}\bigr\| 
&= \bigl\| \frac{2^{k_Y} X - 2^{k_X} Y}{2^{k_Y} 2^{k_X}}\bigr\| 
\leq \bigl\| 2^{k_Y} X - 2^{k_X} Y \bigr\| \\
&\leq \max(2^{k_Y}, 2^{k_X}) \bigl\| X - Y \bigr\| \leq \bigl\| X - Y \bigr\|.
\end{align}
\end{proof}

\subsection{Proof of PTsoftmax}
\label{proofsfm}
In this section, we provide detailed proof of the approximation error of PTsoftmax and base-2 softmax.

\begin{proof}[Proof of~\cref{lem:ptsfm}]
Let $a_i = F_2(x_i)$, $b_i = F_2(\lceil x_i \rceil)$, $c_i = 2^{\lceil x_i \rceil-k}$, where $k =  \left[ \log_2\left(\sum_{j=1}^{n} 2^{x_j}\right) \right]$, We claim the following inequalities:

\begin{enumerate}
    \item \( \frac{1}{2}b_i \leq a_i < 2b_i \).
    \item \( \frac{1}{\sqrt{2}} c_i \leq b_i \leq \sqrt{2} c_i \).
\end{enumerate}

\paragraph{Proof of 1:}
Since \( x_i \leq \lceil x_i \rceil < x_i + 1 \), we consider the worst-case scenarios. For the right side, the worst case occurs when \( x_i = \lfloor x_i \rfloor + \epsilon \), where \( \epsilon \) is an arbitrarily small value, no greater than 1, and \( \lceil x_j \rceil = x_j \) for all \( j \). Denote \( \lfloor x_i \rfloor \) as \( n_i \in \mathbb{Z} \). Then \( \lceil x_i \rceil = n_i + 1 \), so we have:

\begin{align}
b_i &= \frac{2^{n_i+1}}{\sum_{j=1}^{N} 2^{x_j} - 2^{n_i + \epsilon} + 2^{n_i + 1}} = \frac{2 \cdot 2^{n_i}}{\sum_{j=1}^{N} 2^{x_j} - \epsilon + 1} \\
    & \geq \frac{2 \cdot 2^{n_i}}{\sum_{j=1}^{N} 2^{x_j}} \geq 2 \cdot \frac{2^{n_i + \epsilon}}{\sum_{j=1}^{N} 2^{x_j}} = 2a_i.
\end{align}

For the left side, the worst case occurs when \( \lceil x_i \rceil = x_i \) and \( x_j = n_j + \epsilon \) for some \( j \), where \( n_j = \lfloor x_j \rfloor \). Then \( \lceil x_j \rceil = n_j + 1 \), and we have:

\begin{align}
b_i &= \frac{2^{x_i}}{\sum_{j=1}^{N} 2^{n_j+1} - 2^{x_i+1} + 2^{x_i}} = \frac{2^{x_i}}{2 \cdot \sum_{j=1}^{N} 2^{n_j} - 2^{x_i}} \\
    & < \frac{2^{x_i}}{2 \cdot \sum_{j=1}^{N} 2^{n_j}} \leq \frac{2^{x_i}}{2 \cdot \sum_{j=1}^{N}2^{n_j + \epsilon}} = \frac{1}{2} a_i
\end{align}

\paragraph{Proof of 2:}
Given that \( k = \left[ \log_2\left(\sum_{j=1}^{n} 2^{x_j}\right) \right] \), we have the following bounds:
\begin{equation}
    k - 0.5 \leq \log_2\left(\sum_{j=1}^{n} 2^{\lceil x_j \rceil}\right) \leq k + 0.5.
\end{equation}

Thus, the ratio 
\begin{equation}
\frac{b_i}{c_i} = \frac{2^k}{\sum_{j=1}^{N} 2^{\lceil x_j \rceil}}.
\end{equation}

satisfies 
\begin{equation}
\frac{1}{\sqrt{2}} = \frac{2^k}{2^{k+0.5}} \leq \frac{b_i}{c_i} \leq \frac{2^k}{2^{k-0.5}} = \sqrt{2}
\end{equation}
\end{proof}

\section{Evaluation Benchmark}
We evaluate our Sorbet on 7 distinct datasets in the GLUE benchmark as follows:
\begin{itemize}
  \item \textbf{MNLI}: The MNLI (Multi-Genre Natural Language Inference Corpus) is involved in natural language inference tasks. It consists of a collection of sentence pairs annotated for textual entailment through crowdsourcing.
  \item \textbf{QQP}: The QQP (Quora Question Pairs) pertains to tasks involving similarity and paraphrase identification, focusing on pairs of questions from the community Q\&A website, Quora. The primary objective of this task is to ascertain whether a pair of questions are semantically equivalent.
  \item \textbf{QNLI}: The QNLI (Question-answering Natural Language Inference) is a task in natural language inference. QNLI is derived from another dataset, the Stanford Question Answering Dataset (SQuAD 1.0), which is a question-paragraph pair question-answering dataset where the paragraphs are sourced from Wikipedia.
  \item \textbf{SST-2}: The SST-2 (Stanford Sentiment Treebank) is a single-sentence classification task that involves sentences from movie reviews and their sentiment annotations by humans. This task requires classifying the sentiment of a given sentence into positive and negative sentiment.
  
  \item \textbf{STS-B}: The STS-B (Semantic Textual Similarity Benchmark) comprises a collection of sentence pairs extracted from sources such as news headlines, video titles, image captions, and natural language inference data. It is a regression task.
  \item \textbf{RTE}: The RTE (Recognizing Textual Entailment datasets) are from natural language inference tasks. It consolidates datasets from a series of annual textual entailment challenges, with data samples constructed from news sources and Wikipedia.
  \item \textbf{MRPC}: The MRPC (Microsoft Research Paraphrase Corpus) is involved in similarity and paraphrase tasks. It consists of sentence pairs automatically extracted from online news sources, with human annotations to determine if the sentences are semantically equivalent. The categories are not balanced, with 68\% of the samples being positive instances.
\end{itemize}

\section{Extra Results}

\subsection{Empirical Results}
\label{empirical}
We present the empirical results for $\bigl\lceil \log_{2}\bigl(\mathbf{S}(\mathbf{X})\bigr)\bigr\rceil$ collected from different layers of Sorbet in \cref{fig:append_combined}. These results confirm that Assumption~\ref{assump} holds, as all values are greater than zero.

\begin{figure}[h!]
    \centering
    \caption{Distribution of $\mathbf{S}(\mathbf{X})$ measured from various Sorbet layers. The strictly positive values support the assumption made in \cref{assump}.}
    \begin{minipage}{0.8\linewidth}
        \centering
        \includegraphics[width=\linewidth]{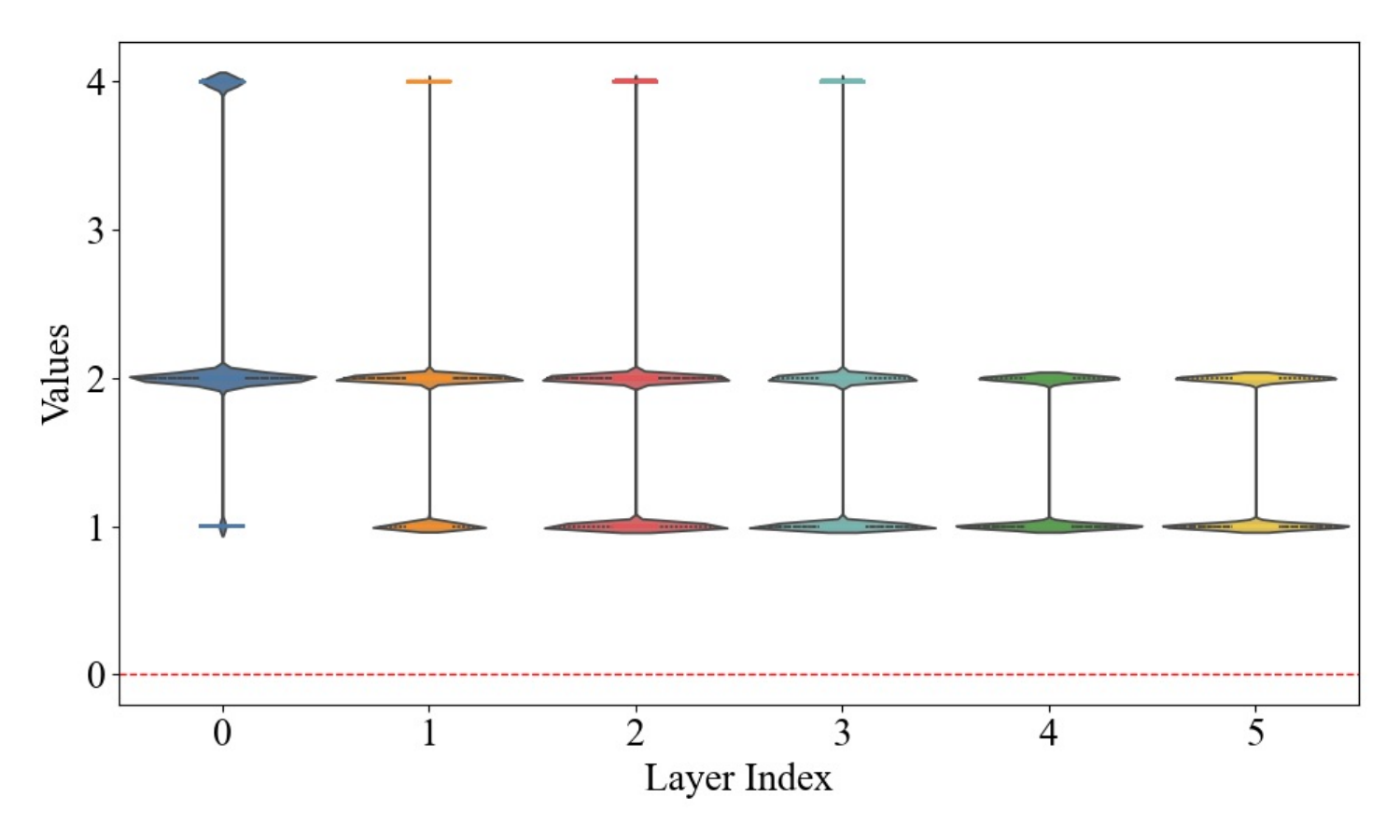}
    \end{minipage}

    \begin{minipage}{0.8\linewidth}
        \centering
        \includegraphics[width=\linewidth]{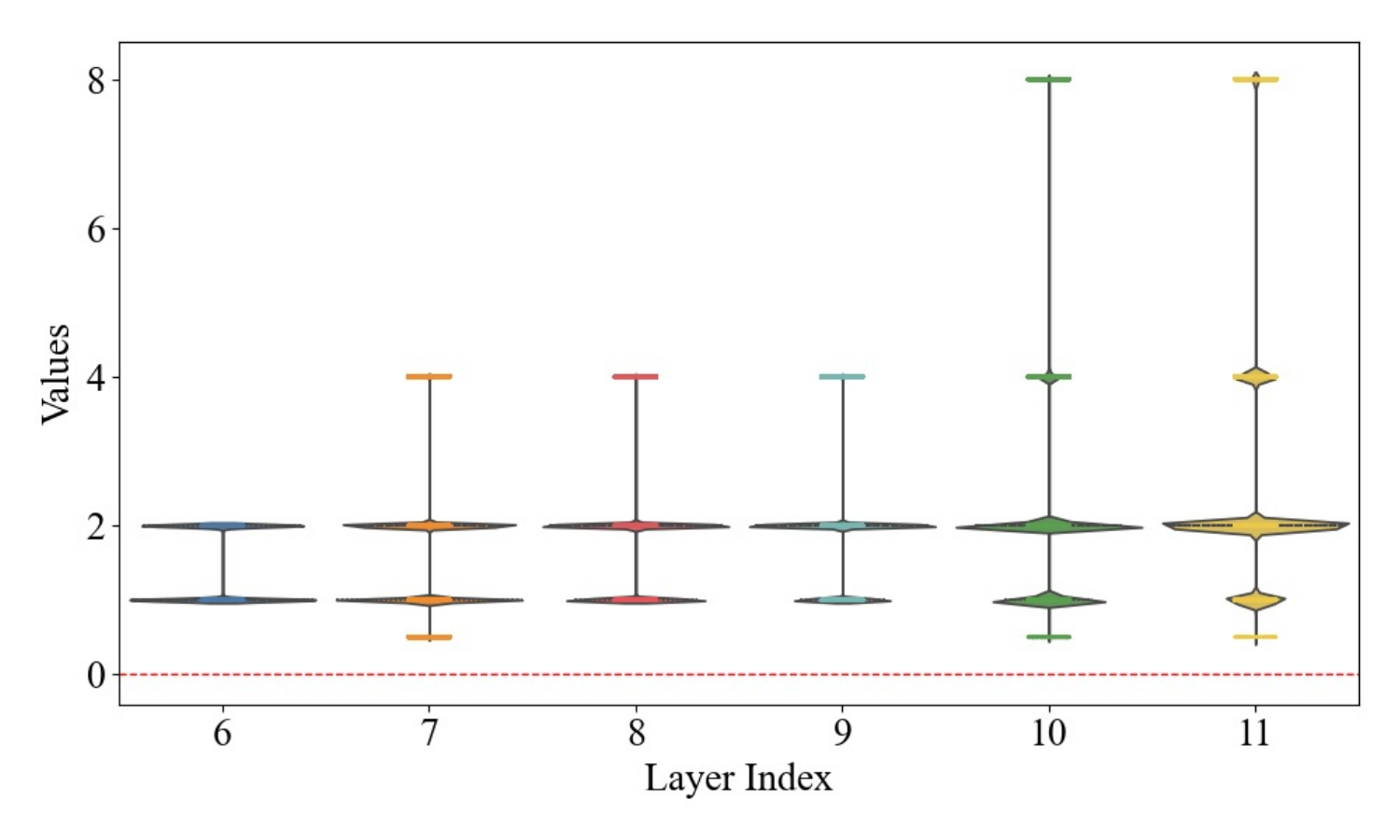}
    \end{minipage}

    \label{fig:append_combined}
\end{figure}

\subsection{Ablation on Quantization Levels}
\label{appendixE}
We provide additional results for different quantization levels and timesteps as a supplement to our ablation study on the SST-2 dataset, shown in \cref{tab:quant_snn_loss}. 

\begin{table}[ht!]
\centering
\caption{Accuracy comparison between quantized ANNs and SNNs under different activation bits and timesteps. $\delta$ is the accuracy drop when converting quantized ANNs into SNNs.}
\begin{tabular}{cc|cccc}
\hline
\multicolumn{2}{c|}{Activation Bits / Timestep} & 1 / 2 & 2 / 4 & 3 / 8 & 4 / 16 \\
\hline
\multirow{2}{*}{Accuracy(\%)} & Quantized ANN & 79.8 & 87.9 & 90.1 & 90.9 \\
 & SNN & 78.7 & 87.4 & 89.3 & 90.4 \\
 \hline
\multicolumn{2}{c|}{$\delta$} & 1.1 & 0.5 & 0.8 & 0.5 \\
\hline
\end{tabular}
\label{tab:quant_snn_loss}
\end{table}

\section{Energy Analysis}
\label{app:energy}
\subsection{Computational cost for whole model}
For assessment of the energy consumption of the whole model, we calculate based on the energy usage of multiplications and additions. For sorbet, the weights are binary while the timestep is 16, so the maximum accumulation can be represented with 1-bit FIX and 4-bit FIX. We tested on 22nm technology 0.01215pJ. As all the baselines are under a 45nm process technology, to make it a fair comparison, we approximate 1 accumulation in Sorbet as \(E_{ACC} = 0.0243\)pJ. 

Note that for original ANN BERT, we use \(E_{MAC} = 4.6\)pJ ~\cite{horowitz20141} for 1 multiplication. Thus, with the spike firing rate \(r\) and the total number of time steps \(T\), the energy cost of Sorbet can be calculated as:

\begin{equation}
    E_{\text{Sorbet}} = T \cdot r \cdot E_{\text{BERT}} \cdot \frac{E_{ACC}}{E_{MAC}}.
\end{equation}

\subsection{Computational cost for operations}
In Table~\ref{energy1}, we list for a input with dimension $d$, the needed operations for different softmax and normalization:
\begin{table}[h!]
\centering
\caption{Computational cost comparison of the PTsoftmax and BSPN with their equivalents.}
\label{energy1}
\setlength{\tabcolsep}{4pt} 
\renewcommand{\arraystretch}{0.8} 
\begin{tabular}{@{}lccccccccc@{}}
\toprule
           & +     &   - & $\times$   & $\div$   &$e^x$&$x^2$& $\sqrt{x}$ & $\gg$ & LUT\\
\midrule
softmax   & $n-1$ & -   & -   & $n$ & $n$ & -   & -    &  -    & -  \\  
PTsoftmax      & $n-1$ & $n$ &  -  & -   & -   & -   & -    & $n$  & 1   \\  
\midrule
LayerNorm        &$3n-2$ & $2n$& $2n$&$n+2$& -   & $n$ & 1    & -     & -  \\  
BSPN      & $2n-1$ & -   & -   & -   & -   & -   & -    & $2n$   & 1  \\  
\bottomrule
\end{tabular}
\end{table}

For our energy calculation in \cref{fig_energy2}, we count the major operations as 0.03pJ for addition, 0.2pJ for multiplication, 0.024pJ for shifting~\cite{you2020shiftaddnet}, 0.59pJ for division~\cite{nadh2023energy} and 1.7pJ for exponential~\cite{wu2019seco}. To make it a fair comparison, we use 8-bit integers for all these operations. For the results in~\cref{fig_energy2}, we take $d=128$. For example, the energy cost of applying softmax once can be calculated as:
\begin{equation}
    E_{\text{softmax}} = d \cdot(E_{\text{ADD}} + E_{\text{DIV}} +E_{\text{EXP}}) = 128\times(0.03+0.59+1.7)=296.96pJ.
\end{equation}



\end{document}